\title[Learning with Distributional Inverters]{Learning with distributional inverters}
\newcommand{\commentstyle}[1]{#1}
\def\ShowAuthNotes{1}
\newcommand{\mcnote}[1]{[{\footnotesize  \textcolor{red}{\bf Marco:} \commentstyle{#1}}]}
\newcommand{\msnote}[1]{[{\footnotesize  \textcolor{blue}{\bf Manuel:} \commentstyle{#1}}]}
\newcommand{\ebnote}[1]{[{\footnotesize  \textcolor{purple}{\bf Eric:} \commentstyle{#1}}]}
\newcommand{\aknote}[1]{[{\footnotesize  \textcolor{green}{\bf Antonina:} \commentstyle{#1}}]}
\newcommand{\rnote}[1]{[{\footnotesize  \textcolor{orange}{\bf Ramyaa:} \commentstyle{#1}}]}
\newcommand{\mcnote}[1]{}
\newcommand{\msnote}[1]{}
\newcommand{\ebnote}[1]{}
\newcommand{\aknote}[1]{}
\newcommand{\rnote}[1]{}
\begin{document}

\maketitle

\begin{abstract}We generalize the ``indirect learning'' technique of
\citet{DBLP:conf/colt/FurstJS91} to reduce from learning a concept
class over a samplable distribution $\mu$ to learning the same concept
class over the uniform distribution. The reduction succeeds when the
sampler for $\mu$ is both contained in the target concept class and
efficiently invertible in the sense of
\citet{DBLP:conf/focs/ImpagliazzoL89}. We give two applications.

\begin{itemize}
\item We show that $\AC^0[q]$ is learnable over any succinctly-described product
  distribution. $\AC^0[q]$ is the class of constant-depth Boolean
  circuits of polynomial size with AND, OR, NOT, and counting modulo
  $q$ gates of unbounded fanins. Our algorithm runs in randomized
  quasi-polynomial time and uses membership queries.
\item If there is a strongly useful natural property in the sense of
  \citet{DBLP:journals/jcss/RazborovR97} --- an efficient algorithm
  that can distinguish between random strings and strings of
  non-trivial circuit complexity --- then general polynomial-sized
  Boolean circuits are learnable over any efficiently samplable
  distribution in randomized polynomial time, given membership queries
  to the target funuction.
\end{itemize}

 \end{abstract}

\begin{keywords}classification, natural properties, one-way functions
\end{keywords}

\section{Introduction}
\label{sec:introduction}

Simple concepts should be efficiently learnable. Exploring this
intuition via formal measures of complexity --- such as VC dimension,
Littlestone dimension, and sample compression cost --- drives progress
in learning theory \citep{DBLP:journals/jacm/BlumerEHW89,
DBLP:journals/ml/Littlestone87, DBLP:journals/jacm/MoranY16}. Recent
work has taken an \emph{algorithmic} perspective: some learning
problems can be reduced to estimating the complexity of a given
concept \citep{ DBLP:conf/coco/CarmosinoIKK16,DBLP:conf/coco/OliveiraS17}. This is different
from characterizing the sample complexity of learning via a combinatorial dimension, 
because the relationship is inherently algorithmic.

However, some learning algorithms 
are distribution-specific: 
they only produce accurate hypotheses for a target function $f$ under the
\emph{uniform distribution} over the domain.  
This is quite limiting, as most
distributions of interest in nature are \emph{not} uniform. For
example, the accuracy of image classification is certainly not
assessed under the uniform distribution on pixels. Formalizing this
observation, the Probably Approximately Correct (PAC) setting of
\citet{DBLP:journals/cacm/Valiant84} is distribution-free: it requires
that a learning algorithm produce an accurate hypothesis over
\emph{any} given distribution.

In this paper, we consider the intermediate setting of learning over
efficiently samplable distributions. We reduce the task of learning a concept class $\Lambda$ over a
samplable distribution $\mu$ to learning $\Lambda$ over the uniform distribution
and inverting a sampler for $\mu$. As a concrete application of this
reduction, we give the first membership-query learning algorithm over
product distributions for $\AC^0[p]$ --- constant-depth circuits with AND, OR, NOT, and
counting modulo $p$ gates for any fixed prime number $p$.
This generalizes previous work that could only learn $\AC^0$
circuits --- which lack counting modulo $p$ gates --- over product
distributions \citep{DBLP:conf/colt/FurstJS91}. Though learning
$\AC^0$ can be done with random examples, it is a strictly less
expressive concept class than $\AC^{0}[p]$ \citep{Razborov:aa,
  DBLP:conf/stoc/Smolensky87}.

\begin{theorem}
\label{thm:learn-AC0p-prod}
$\AC^{0}[p]$ is membership-query learnable over
  product distributions in randomized quasipolynomial time.
\end{theorem}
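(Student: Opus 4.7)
The plan is to apply the paper's main distribution-shifting reduction: to learn $\AC^0[p]$ over a product distribution $\mu$, it suffices to exhibit (i) a uniform-distribution, membership-query learner for $\AC^0[p]$, (ii) a sampler $S$ for $\mu$ that itself lies in $\AC^0[p]$, and (iii) an efficient distributional inverter for $S$. Ingredient (i) is already available in the literature: \citet{DBLP:conf/coco/CarmosinoIKK16} give a randomized quasi-polynomial-time membership-query learning algorithm for $\AC^0[p]$ under the uniform distribution.

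For ingredient (ii), suppose the succinct description of the product distribution specifies that coordinate $i$ takes value $1$ with probability $p_i = a_i / 2^k$ for integers $a_i \in \{0,\ldots,2^k\}$ and some $k = \mathrm{poly}(n)$. Define $S : \{0,1\}^{nk} \to \{0,1\}^n$ by parsing the input as $n$ consecutive $k$-bit blocks $y_1,\ldots,y_n$ (each interpreted as an integer) and setting the $i$-th output bit to $1$ iff $y_i < a_i$. Since each $y_i$ is uniform on $\{0,\ldots,2^k-1\}$, the output of $S$ is distributed exactly as $\mu$. Each output bit is a comparison of two $k$-bit integers, computable by a constant-depth polynomial-size circuit over AND/OR/NOT; hence $S \in \AC^0 \subseteq \AC^0[p]$.

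For ingredient (iii), note that $S$ acts independently on each block of $k$ input bits, so the uniform distribution over $S^{-1}(x)$ is simply the product over $i$ of the uniform distribution on $\{0,\ldots,a_i-1\}$ when $x_i=1$, and on $\{a_i,\ldots,2^k-1\}$ when $x_i=0$. A straightforward randomized polynomial-time algorithm samples exactly from this distribution, providing a perfect distributional inverter for $S$.

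The main technical point that needs verifying is that the composed function $f \circ S$, where $f$ is the target $\AC^0[p]$ circuit, remains an $\AC^0[p]$ circuit of at most polynomially larger size and constant depth, so that the CIKK uniform learner can handle it. This is immediate because $S$ itself is constant-depth and polynomial-size. Membership queries to $f \circ S$ at a point $y$ are answered by computing $S(y)$ and querying the membership oracle for $f$; and the distributional inverter is then used to convert a hypothesis $h$ for $f \circ S$ into a hypothesis for $f$ on $\mu$-typical inputs by outputting $h(y)$ for $y$ sampled from the inverter on $x$. The accuracy guarantee for this final hypothesis follows from the correctness of the reduction (combining the Impagliazzo--Luby inversion analysis with the Furst--Jackson--Smith indirect-learning template), and the overall running time inherits the quasi-polynomial bound from the CIKK learner.
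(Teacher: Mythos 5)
Your proposal follows essentially the same route as the paper: the same indirect reduction (Lemma~\ref{lem:flex-learn-reduction}), the same $\AC^0$ comparison-based sampler for the product distribution, coordinate-wise inversion of each bias, and the CIKK membership-query learner over the uniform distribution, so the argument goes through as the paper's does. The one imprecision is the claim of a ``perfect'' strict-polynomial-time inverter: when $a_i$ is not a power of two one cannot sample exactly uniformly from $\{0,\dots,a_i-1\}$ using finitely many unbiased coin flips, so (as the paper does) you should use rejection sampling with an exponentially small failure probability, which is harmless because distributional inversion only requires inverse-polynomial statistical distance.
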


In the general case, we strengthen connections between learning and
complexity lower bounds. Algorithms that distinguish between random
and circuit-compressible strings are called \emph{natural properties},
because they are implicit in ``natural'' proofs of complexity lower
bounds for Boolean circuits \citep{DBLP:journals/jcss/RazborovR97}. If
there is a natural property against general Boolean circuits, we can
use it twice: first to learn circuits over the uniform distribution,
and second to invert any efficiently samplable distribution over
examples. This construction expands applicability of the ``learning
from natural properties'' technique from only the uniform distribution
to every polynomial-time samplable distribution (\textsc{Psamp}) ---
arguably the most natural class of distributions over which to measure
the accuracy of any hypothesis.

\begin{theorem}
  \label{thm:natP-to-learnPSAMP}
  If there is a strongly useful $\P$-natural property, then general
  polynomial-size circuits can be learned over any distribution in
  \textsc{PSamp} using membership queries in randomized polynomial
  time.
\end{theorem}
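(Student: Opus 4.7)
The strategy is to combine three pieces: the distributional-inverter reduction that is the technical heart of this paper, the Carmosino--Impagliazzo--Kabanets--Kolokolova (CIKK16) construction that turns a strongly useful natural property into a uniform-distribution learner for the class it refutes, and the Razborov--Rudich/Impagliazzo--Luby chain that extracts a distributional inverter for every polynomial-time function from the same natural property.

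Fix any $\mu \in \textsc{PSamp}$ with polynomial-time sampler $S$. Because $S$ runs in polynomial time, it is computable by polynomial-size circuits and hence lies in the target concept class P/poly; this verifies the hypothesis of the main reduction of this paper, which then reduces learning P/poly over $\mu$ to two subtasks: (i) learning P/poly over the uniform distribution with membership queries, and (ii) distributionally inverting $S$. For (i), I would apply CIKK16 directly to the hypothesized strongly useful $\P$-natural property to obtain a randomized polynomial-time, membership-query, uniform-distribution learner for P/poly. For (ii), I would use the same property once more: by Razborov--Rudich, a strongly useful $\P$-natural property against P/poly refutes the existence of sub-exponentially secure one-way functions, and by Impagliazzo--Luby, the nonexistence of (weak) one-way functions is equivalent to the existence, for every polynomial-time $S$, of a randomized polynomial-time distributional inverter that samples approximately from the conditional distribution on preimages of $S$. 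Plugging these two components into the reduction yields the claimed learner for P/poly over $\mu$.

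The main obstacle I expect is parameter-threading rather than any new conceptual step. The distributional inverter is only statistically close to the true preimage distribution, the CIKK16 learner has its own accuracy/confidence budget, and the reduction amplifies both sources of error. I would have to verify that these inverse polynomials compose benignly, and that each membership query the uniform learner issues on a synthesized example can be translated, through the inverter, into a legal membership query on the true target function without super-polynomial query overhead. A secondary subtlety is that the Razborov--Rudich$\rightarrow$no-OWF implication is typically phrased for sub-exponential hardness; I would need to confirm that the inverse-polynomial distributional inversion guaranteed by Impagliazzo--Luby is indeed what the main reduction consumes, rather than some stronger notion that would demand additional assumptions.
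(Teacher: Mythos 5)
Your proposal takes essentially the same route as the paper: it combines the main reduction (Lemma \ref{lem:flex-learn-reduction}) with the CIKK16 uniform-distribution learner (Theorem \ref{thm:natP-to-learnU}) and with distributional inverters obtained from the natural property via the ``natural properties break one-way functions'' plus Impagliazzo--Luby chain, which is exactly how the paper assembles Lemma \ref{lem:natP-to-distI} from Lemmas \ref{lem:weakI-to-NatProof}, \ref{lem:strongI-to-weakI}, and \ref{lem:distI-to-strongI}. The one point to make explicit (which you partly flag yourself) is that these implications must be invoked as constructive, oracle-relative reductions yielding an actual polynomial-time inverter with access to the property --- not merely as existence/non-existence statements about one-way functions --- and that is precisely how the paper instantiates them.
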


\subsection{Our Approach}
\label{sec:our-approach}

Our main technical lemma reduces learning over a distribution $\mu$ to
learning over the uniform distribution and inverting a sampler for
$\mu$. To sketch the construction, let $f$ be a target function,
$S_\mu$ be the efficient machine that transforms uniformly random bits
into samples from $\mu$, and let $I_\mu$ be an inverter for
$S_\mu$. Consider the composed function $g(r) = f(S_\mu(r))$ --- given
coins for the sampler as input, $g$ evaluates $f$ over the resulting
sample from $\mu$. By assumption, we can learn $g$ over the uniform
distribution, getting a preliminary hypothesis circuit $h$. Our final
hypothesis is $h'(x) = h(I_\mu(x))$. Intuitively, since $I_\mu(x)$
maps $x$ back to sampler coins $r$ such that $S_\mu(r) = x$ and $h$ is
a ``good'' hypothesis for $g$, the composed function $h'$ will
generally agree with $f$ on inputs sampled from $\mu$. This is a
natural generalization of the ``indirect approach'' used by
\citet{DBLP:conf/colt/FurstJS91} to learn $\AC^{0}$ over product
distributions; see Section \ref{sec:related-work} for a detailed comparison.

We generalize and extend the ``indirect approach'' by selecting an appropriate definition of
``successful inversion'' such that the simple reduction above is correct. We use
\emph{distributional inversion}, which requires an inverter for the
function $g$ to produce, on input $y$, \emph{uniformly random}
pre-images of $y$ from $g^{-1}(y)$. This ``nice'' guarantee on the
distribution over pre-images allows us to reduce to learning $g$
over the uniform distribution.

Distributional inversion was introduced to show that one-way functions
--- easy to compute but hard to invert --- are essential for
cryptography \citep{DBLP:conf/focs/ImpagliazzoL89}. Instead, we use
the definition and associated cryptographic techniques to give
(conditional and unconditional) \emph{upper bounds} in learning
theory.  Then, since from \citet{DBLP:conf/coco/CarmosinoIKK16} we
already know (1) natural properties imply membership-query learning
over the uniform distributon for general circuits and (2)
membership-query learning of $\AC^0[p]$ over the uniform distribution,
we work to produce expressive distributional inverters
for natural classes of distributions each setting.

\paragraph{(1) Learning over $\psamp$ From Natural Properties.}  We
compose existing results showing that natural properties can break any
standard one-way function \citep{DBLP:conf/coco/ImpagliazzoKV18} with
a uniform cryptographic construction of distributional one-way
functions from standard one-way functions
\citep{DBLP:conf/focs/ImpagliazzoL89}. Because the proofs of these
results are implicitly constructive --- they give algorithms that
reduce the task of finding uniformly distributed pre-images to finding
\emph{any} pre-image --- we can use them to reduce distributional
inversion for any samplable distribution to natural properties.

\paragraph{(2) Learning $\AC^0[p]$ Over Product
Distributions.}  
The above construction requires natural properties
able to distinguish random strings from strings compressible by
linear-depth circuits. But the natural properties that we actually
know just distinguish $\AC^0[p]$-compressible from random strings ---
and $\AC^{0}[p]$ only contains \emph{constant-depth} circuits!  Even
against $\TC^0$ circuits, let alone log-depth, natural properties are
unknown and indeed unlikely to exist
\citep{DBLP:journals/jacm/NaorR04}. So, instead, we identify a
well-studied class of distributions --- product distributions
--- for which distributional inverters and samplers exist
unconditionally. Crucially, the samplers are computable in $\AC^0$ ---
this ensures that the natural properties for $\AC^0[p]$ suffice to
learn over the uniform distribution.

\subsection{Related Work}
\label{sec:related-work}

\paragraph{Utility \& Hardness of Compressability Testing} To
formalize ``compressability testing,'' we refer to the Minimum Circuit
Size Problem ($\MCSP$): given the truth table of a Boolean function
$f$ and a number $s$, does $f$ have a circuit of size at most $s$? It
is a major open question to understand the hardness of $\MCSP$ --- the
problem is clearly in $\NP$, but not known to be in $\P$, and
determining if $\MCSP$ is $\NP$-hard is a major open question. The
``simple'' gadget reductions that suffice for textbook $\NP$-hardness
results cannot be used to show $\NP$-hardness of $\MCSP$
\citep{DBLP:journals/toc/MurrayW17}. However, there are efficient
randomized Turing reductions from factoring, graph isomorphism, and
any problem with a Statistical Zero-Knowledge proof system to $\MCSP$
\citep{DBLP:journals/siamcomp/AllenderBKMR06,
DBLP:journals/iandc/AllenderD17}. All these reductions exploit $\MCSP$
to invert efficiently-computable functions that encode information
about the problem instance. We use this ability (as Lemma
\ref{lem:weakI-to-NatProof}) in our reduction from inverting samplable
distributions to natural properties.

\paragraph{Learning from $\MCSP$} Different variants of $\MCSP$ are
useful for different learning tasks. With membership queries over the
uniform distribution, there is in fact an \emph{equivalence} between
efficient compressability testing and efficient learning
\citep{DBLP:conf/coco/OliveiraS17}. If we have a \emph{tolerant}
natural property --- which can detect strings that are just
\emph{close} to circuit-compressible strings --- then we can learn
over the uniform distribution even when some fraction of membership
queries are adversarially corrupted
\citep{carmosino_et_al:LIPIcs:2017:7584}. $\MCSP$ for \emph{partial}
truth tables --- which allow some ``don't care'' entries --- can even
be used to learn from random examples
\citep{DBLP:conf/coco/IlangoLO20}. Similar to the works above, these
learning results all use (variants of) $\MCSP$ to break a pseudorandom
object that encodes information about the target concept: they are
distinguisher to predictor transformations. We combine both types of
reductions to expand the class of learning problems that reduce to
standard natural properties.

\paragraph{Learning vs. One-Way Functions} The existence of one-way
functions is related to (in)feasibility of learning in a variety of
settings. \citet{DBLP:conf/stoc/KearnsMRRSS94} introduced the setting
of learning a \emph{distribution} --- building an approximate sampler
from random examples, instead of an approximate function. If one-way
functions exist, then there are distributions that are easy to sample
from but hard to learn in this setting. \emph{Inductive inference} is
the problem of observing an efficiently generated sequence and then
predicting the next element
\citep{DBLP:journals/iandc/Solomonoff64a}. Inductive inference (in an
average-case setting) is possible if and only if one-way functions do
not exist \citep{DBLP:conf/focs/ImpagliazzoL90}. Later work even
showed that inductive inference for adaptively changing sequences is
possible if and only if one-way functions do not exist
\citep{10.1145/1143844.1143926}. Though we use some of the same
relationships between cryptographic primitives, our results are for a
different, PAC-type setting.

\paragraph{Learning Constant-Depth Circuits} The techniques used in
proving complexity lower bounds can often be re-purposed to build
learning algorithms. \citet{DBLP:conf/focs/LinialMN89} used the lower
bounds against $\AC^0$ of \citet{DBLP:journals/mst/FurstSS84} to learn
$\AC^0$ over the uniform distribution. Then,
\cite{DBLP:conf/colt/FurstJS91} generalized this approach to learn
$\AC^0$ over product
distributions. \citet{DBLP:conf/coco/CarmosinoIKK16} used the lower
bounds against $\AC^0[p]$ of \citet{Razborov:aa,
  DBLP:conf/stoc/Smolensky87} to learn $\AC^0[p]$ over the uniform
distribution (with membership queries). It was therefore natural to
ask if we can obtain learning for $\AC^{0}[p]$ over product
distributions via similar ideas.

\citet{DBLP:journals/mst/FurstSS84} give two different techniques for
learning $\AC^{0}$ over product distributions. The first generalizes
the underlying Fourier techniques of \citet{DBLP:conf/focs/LinialMN89}
to account for evaluation on product distributions. This technique is
not applicable to $\AC^{0}[p]$, which lacks the required Fourier
concentration properties. We generalize \emph{the analysis of} their
second approach, which composes an $\AC^{0}$-sampler for product
distributions with the target function, and an inverter with the
hypothesis. Essentially, we use a stronger definition of ``successful
inversion.'' They require only that the inverter produce a
distribution that is close --- as a vector of reals --- to the uniform
distribution. This suffices to extend Fourier-based learning to
product distributions, but could fail for algorithms that use
membership queries. \citet{DBLP:journals/mst/FurstSS84} noted that
``it is not at all clear'' how their second technique could be used on
an arbitrary distribution. Our analysis using distributional inversion
(Lemma \ref{lem:flex-learn-reduction}) gives straightforward
conditions under which the technique works for \emph{any samplable}
distribution.

 \section{Preliminaries}
\label{sec:preliminaries}
We denote by $\circ$ concatenation of strings, and by $\cU_{m}$ the
uniform distribution over $m$ bits.  Let $\mathcal{D}_p$ be a
distribution over $\bool$ with $p$ the probability of a 1. A product
distribution $\mathcal{D} = \mathcal{D}_{p_1} \times \mathcal{D}_{p_2}
\times \dots \times \mathcal{D}_{p_n}$  is a distribution over
$\bool^n$ where each coordinate $x_i$ is independently 1 with
probability $p_i$. We say that $\mathcal{D}_p$ is \emph{concise} (has
a concise description) if there exists a constant $k$ such that $p =
s/2^k$, where $s$ is a $k$-bit binary string encoding an integer; we
refer to $k$ as the \emph{precision} for $p$.

We will distinguish between uniform and non-uniform complexity
classes. A complexity class $\Lambda$ is generally defined by
restricting Boolean devices by size, depth, or admissible gates. We
furthermore say that a function $f\colon \bool^\star \to \bool^\star$ is
in $\P$-uniform $\Lambda$ if there is a polynomial-time algorithm
that, on input $1^n$, prints a $\Lambda$-device that computes
$f_n \colon \bool^n \to \bool^\star$, the ``slice'' of $f$ on $n$-bit
inputs.

\subsection{Sampling \& Inversion}
\label{sec:samplers-inversion}

\begin{definition}[$t(n)$-Samplable Ensemble]
  An ensemble $\mu = \{ \mu_{n} \}_{n \in \mathbb{N}}$ is a sequence
  of probability distributions over binary strings of length $n$. The
  ensemble is $t(n)$-samplable if there is a machine $S_{\mu}$ running
  in $t(n)$ time such that, for every $n$ and for every
  $x \in \bool^n$:
  \[
    \Pr\limits_{r \sim \mathcal{U}}[S_{\mu}(1^{n},r) = x] = \mu_{n}(x)
  \]
\end{definition}

The class of ``efficiently samplable'' distributions is
$\textsc{Psamp}$, every $\poly$-samplable ensemble. We will invert
samplers for $\mu$ to learn over $\mu$. That is, we will learn over
distributions with samplers that are \emph{not} one-way functions
(OWFs) --- easy to compute but hard to invert. OWFs are fundamental
objects required for cryptography \citep{DBLP:conf/focs/Yao82a,
  goldreich2001}. Theorems about OWFs are often proved via efficient
reductions between inversion problems; these reductions are useful
sub-routines in our learning setting. Below we define two fundamental
inversion problems by negating standard definitions of OWFs.

\begin{definition}[Invertible Functions]
  Let $f \colon \bool^{\star} \to \bool^{\star}$ be a function on
  bitstrings. We say that $f$ is $(\eta, A)$-invertible if there
  exists a probabilistic polynomial-time oracle Turing Machine $I$
  running in time $t(n)$ such that:
  \[
    \Pr_{x \sim \mathcal{U}_{n}}\left[ f(I^A(f(x))) = f(x) \right]
    \geq \eta
  \]
\end{definition}

Whenever a probability is taken over a probabilistic algorithm, the
coins used by the algorithm are implicitly part of the sample space.
When $\eta$ is $1/\poly(n)$, we call the inversion ``weak'' because it
is not particularly likely, but still noticeable by an efficient
machine. When $\eta$ is $(1 - 1/\poly(n))$ we call the inversion
``strong.''

The above definition ignores the distribution on $f(x)$ pre-images
produced by the inverter. It only prescribes the time complexity of
inversion and the probability that \emph{some} pre-image will be
found. We will require that the inverter produce ``well-distributed''
(i.e., uniformly random) pre-images for any given input. To define
this, we first give a strong information-theoretic notion of
distributional closeness.

\begin{definition}[Statistical Indistinguishability]
  Probability distributions $\mu^{0}_n$ and $\mu^{1}_n$ on $\{0,1\}^n$
  are \textbf{statistically indistinguishable within $\delta$}, 
  denoted $\mu^{0}_n \equiv_{\delta} \mu^{1}_n$,  if
  \[
    \forall T \subseteq \{0,1\}^n
    \left| \Pr_{x \sim \mu^{0}_{n}}[x \in T]
      - \Pr_{x \sim \mu^{1}_{n}}[x \in T] \right| \leq \delta
  \]
\end{definition}

Then, we negate the definition of a distributionally one-way function
\citep{DBLP:conf/focs/ImpagliazzoL89, RussellThesis} to define
well-distributed inversion. Though these definitions are stated for
general functions, our applications will always invert the sampler for
a probability ensemble.

\begin{definition}[Distributionally Invertible Function]
  \label{def:dist-invert}
  We say that a function $f\colon \bool^{\star} \to \bool^{\star}$ is
  $(\delta, A)$-distributionally invertible if there is an efficient
  probabilistic oracle Turing Machine $I$ such that the distributions
  $x \circ f(x)$ and $I^A(f(x)) \circ f(x)$ where
  $x \sim \mathcal{U}_{n}$ are statistically indistinguishable within
  $\delta$ for all but finitely many lengths $n$. That is,
  \[
    x \circ f(x) \equiv_{\delta} I^A(f(x)) \circ f(x)
    ~~\text{where} ~x \sim \mathcal{U}_{n}.
  \]
\end{definition}

In particular, we'll need distributional inversion for all
polynomially small statistical distances. So we say that a function
$f$ is $(1/\poly, A)$-distributionally invertible if, for every
constant $c > 0$, we have that $f$ is $(1/n^c, A)$-distributionally
invertible.

\subsection{Learning Model}
\label{sec:learning-theory}

Our learning model is a relaxation of Valiant's Probably Approximately
Correct (PAC) setting \cite{DBLP:journals/cacm/Valiant84}.
We allow queries, target a specific distribution, and only require
inverse-polynomial error and failure bounds. Formally, let $f$ be a
Boolean function. Our learners are allowed \emph{membership queries}
to $f$, meaning that it may query an input $x \in \{0,1\}^{n}$ and get
back the value $f(x)$ from an oracle in unit time.

\begin{definition}[Query learning over the ensemble $\mu$]
  \label{def:query-learn}
  Let $\Lambda$ be any class of Boolean functions, and let
  $\mu = \{\mu_{n}\}$ be a probability ensemble. We say that
  \emph{$\Lambda$ is query-learnable over $\mu$} if, for any error
  bound $\varepsilon \in \poly^{-1}(n)$ and failure bound
  $\delta \in \poly^{-1}(n)$, there exists an algorithm $A$ that,
  given membership queries to any $n$-bit $f \in \Lambda$, outputs a
  hypothesis $h$ such that
  $\Pr_{x \sim \mu_{n}}[h(x) \neq f(x)] \leq \varepsilon(n)$, with
  probability at least $1 - \delta(n)$ over its internal
  randomness. The runtime of $A$ is measured as a function
  $T = T(n, ~1/\delta, ~1/\varepsilon)$.
\end{definition}

Naturally, we say that $\Lambda$ is learnable over a \emph{class} of
ensembles $\cD$ --- such as \textsc{Psamp} --- if $\Lambda$ is
learnable over every $\mu \in \cD$. Note that a different algorithm
may be used to learn over each distribution in the class, and indeed
to guarantee each inverse-polynomial error and failure rate. However,
our algorithms are uniform with respect to these parameters. Given the
code of a sampler for $\mu$ and the functions $\varepsilon$ and
$\delta$, we can efficiently print the code of $A$ for learning over
$\mu$ to error $\varepsilon$ with failure probability $\delta$.

\subsection{Natural Properties}
\label{sec:natural-properties}
Let $\cF_n$ be the collection of all Boolean functions on $n$ input
variables. A \emph{combinatorial property} of Boolean functions is a
sequence of subsets of $\cF_n$, one for each $n$. We denote a circuit
class by $\Lambda$, and a machine class by $\Gamma$.

\begin{definition}[Natural Property]
  A combinatorial property $\cR = \{\cR_n\}_{n\in\mathbb{N}}$ is
  $\Gamma$-Natural against $\Lambda$ with density
  $\delta_n : \mathbb{N} \to [0,1]$ if it has the following
  properties:
  \begin{itemize}
  \item \textbf{Constructivity:} When $f \in \cF_n$ is given as a
    truth table --- a $2^n$-bit string --- the predicate
    $f_n \overset{?}{\in} \cR_n$ is computable in $\Gamma$.
  \item \textbf{Largeness:} $|\cR_n| \geq \delta_n \cdot |F_n|$
  \item \textbf{Usefulness:} For any sequence of functions $f_n$, if
    $f_n \in \Lambda$ then $f_n \not\in \cR_n$ for all but finitely
    many $n$.
  \end{itemize}
\end{definition}

Denote by $\SIZE[s(n)]$ the set of functions computed by unrestricted
Boolean circuits with at most $s(n)$ gates on $n$ input bits. We then
call a natural property $\cR$ \emph{strongly useful} if there is some $a$, 
$0 <a <1 $, such that $\cR$ is useful against $\SIZE[2^{an}]$ and has
density $\delta_n \geq 1/2$.
 \section{Learning Over $\mu$ Reduces to Learning Over $\cU$ and
  Inverting $\mu$}
\label{sec:flexible-learn}

For efficiently samplable distributions $\mu$, learning a concept
class $\Lambda$ over $\mu$ reduces to learning $\Lambda$ over the
uniform distribution and producing a distributional inverter for
$\mu$. To sketch the argument, let $\Lambda$ be any class of functions
that is closed under composition and contains the sampler for
$\mu$. Then, the $\mu$-sampler composed with the target concept $f$
will also be a function $f_{\mu} \in \Lambda$ --- and thus learnable
over the uniform distribution. If we compose the output hypothesis $C$
that results from learning $f_\mu$ with a distributional inverter
$I_\mu$ for $\mu$, inputs distributed according to $\mu$ will be
indistinguishable from the coins used to sample from $\mu$.

Then, we show that $C(I_\mu(\cdot))$ approximates $f$ over the
distribution $\mu$, because $C$ approximates $f_{\mu}$ over the
uniform distribution and the inverter is ``good.'' Intuitively, if the
event ``$C(I_\mu(\cdot))$ is wrong about $f$'' occurs ``too much''
compared to the event ``$C$ is wrong about $f_{\mu}$'', we can use
this gap as a statistical distinguisher for the ability of $I_{\mu}$
to invert --- a contradiction.

\begin{lemma}[Inversion to Shift Benchmark Distribution]
  \label{lem:flex-learn-reduction}
  Let $\mu$ be any ensemble with a $(1/\poly, A)$-distributionally
  invertible sampler $S_{\mu} \in$ $\P$-uniform $\Lambda$. If
  $\Lambda$ is closed under composition and query-learnable over the
  uniform distribution, then $\Lambda$ is query-learnable over $\mu$.
\end{lemma}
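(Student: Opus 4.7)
The plan is to realize the indirect-learning sketch as a concrete reduction and then use the distributional inversion guarantee as a change-of-measure inequality. Fix a target $f \in \Lambda$ on $n$ bits, together with desired error $\varepsilon$ and failure $\delta$. Let $S_\mu$ use $m = \poly(n)$ coins, and define the auxiliary function $g(r) := f(S_\mu(1^n, r))$. Because $\Lambda$ is closed under composition and contains $S_\mu$, we have $g \in \Lambda$ on $m$ bits. Membership queries to $g$ cost one evaluation of $S_\mu$ and one membership query to $f$, so the hypothesis of the lemma lets us invoke the uniform-distribution learner for $\Lambda$ on $g$ with parameters $\varepsilon' = \varepsilon/2$ and $\delta' = \delta/2$, producing a hypothesis $h$ with $\Pr_{r \sim \cU_m}[h(r) \neq g(r)] \leq \varepsilon/2$ except with probability $\delta/2$.

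Next, I would invoke distributional inversion of $S_\mu$ with statistical-distance parameter $\delta'' = \varepsilon/2$; the lemma's hypothesis provides a probabilistic oracle machine $I_\mu$ meeting Definition~\ref{def:dist-invert} with this parameter. The final hypothesis is $h'(x) := h(I_\mu(x))$, a circuit of size $\poly(n)$ obtained by printing $I_\mu$ on $1^n$ and wiring its output into $h$. Note that $h'$ need not lie in $\Lambda$ — query learning only demands an efficiently evaluable hypothesis.

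The accuracy analysis is then a one-line data-processing argument. Let $r' \sim \cU_m$ and note that by definition $S_\mu(r') \sim \mu_n$, so we rewrite
\[
\Pr_{x \sim \mu_n}\bigl[h'(x) \neq f(x)\bigr]
= \Pr_{r' \sim \cU_m}\bigl[h(I_\mu(S_\mu(r'))) \neq f(S_\mu(r'))\bigr].
\]
By Definition~\ref{def:dist-invert}, the joint distributions $I_\mu(S_\mu(r')) \circ S_\mu(r')$ and $r \circ S_\mu(r)$ with $r \sim \cU_m$ are $\delta''$-statistically close. The event ``the first coordinate, when fed to $h$, disagrees with $f$ applied to the second coordinate'' is a fixed subset $T \subseteq \bool^{m} \times \bool^n$, so statistical indistinguishability bounds the probability of $T$ under the two distributions within $\delta''$. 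The probability of $T$ under $r \circ S_\mu(r)$ is exactly $\Pr_r[h(r) \neq g(r)] \leq \varepsilon/2$, yielding total error at most $\varepsilon/2 + \delta'' = \varepsilon$, as required.

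The main subtlety — and the step I would write most carefully — is parameter bookkeeping and the clean use of the inversion guarantee. One has to resist the temptation to reason about pre-images marginally: it is the \emph{joint} coupling $(I_\mu(y), y)$ versus $(r, S_\mu(r))$ that is statistically close, and this is exactly what is needed, since the error event depends on both coordinates simultaneously. Everything else (composing the $\poly^{-1}$ statistical distance promised by $(1/\poly, A)$-distributional invertibility with the $\poly^{-1}$ error promised by the uniform-distribution learner, and summing failure probabilities by a union bound) is routine, and the uniformity note after Definition~\ref{def:query-learn} ensures that the two sub-routines can be instantiated from the code of $S_\mu$ together with $\varepsilon$ and $\delta$.
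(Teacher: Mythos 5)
Your proposal is correct and follows essentially the same route as the paper: compose the sampler with the target to get $f_\mu \in \Lambda$, learn it over $\cU$ to error $\varepsilon/2$, compose the resulting hypothesis with an $(\varepsilon/2, A)$-distributional inverter, and bound the error of the composed hypothesis by treating the disagreement event as a statistical test on the joint distribution of (pre-image, sample). The only cosmetic difference is bookkeeping (the paper charges the whole failure probability to the learner, and notes explicitly that the final hypothesis needs the oracle $A$ and fresh coins for the inverter), which does not change the argument.
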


\begin{proof}
  Fix arbitrary $\alpha, \beta \in \poly^{-1}(n)$ to given error and
  failure bounds, as in Definition~\ref{def:query-learn}. Let $f$ be
  any $n$-bit function in $\Lambda$, and $r_{S}(n)$ be the number of
  random bits required by sampler $S_{\mu}$ to simulate
  $\mu_{n}$. Define the composed function
  $f_{\mu}\colon \bool^{r_{S}(n)} \rightarrow \bool$ as
  $f_{\mu}(w) = f(S_{\mu}(1^{n}, w))$. Because $\Lambda$ is closed
  under composition, $f_{\mu} \in \Lambda$.

  Furthermore, given membership query access to $f$ and using the
  uniformity of $S_{\mu}$, we can simulate membership queries to
  $f_{\mu}$. Therefore, since $\Lambda$ is query learnable over
  $\mathcal{U}$, we can efficiently approximate $f_{\mu}$ over the
  uniform distribution to within any inverse-polynomial error bound
  and failure rate. In particular, since
  $\alpha, \beta \in \poly^{-1}(n)$, there is a learner $L$ that
  prints (with probability at least $1 - \beta$) a circuit $C$ such
  that $\Pr_{w \sim \cU}[C(w) \neq f_{\mu}(w)] \leq \alpha/2$. Our
  algorithm first runs $L$ with simulated membership queries to
  produce such a $C$. The second part of our hypothesis is an
  appropriate inverter. As guaranteed by the
  $(1/\poly,A)$-invertability assumption and $\alpha \in 1/\poly(n)$,
  let $I_{\mu}$ be an $(\alpha/2, A)$-distributional inverter for
  $S_{\mu}$. Denote by $r_{I}(n)$ the number of random bits required
  by $I_{\mu}$ to invert $S_{\mu}$. Output $C'(x) = C(I_{\mu}^A(x))$
  as our hypothesis for $f$ over $\mu_{n}$. Notice that the output
  hypothesis will require an oracle for $A$ and random bits to run the
  inverter.

  We bound the error of $C'$ over $\mu$ by combining the error of $C$
  over $\mathcal{U}$ and the statistical indistinghisiability of
  $I_{\mu}$. Define a statistical test $\err(w, x)$ which is equal to 1
  if $C(w) \neq f(x)$ and 0 if $C(w) = f(x)$. Consider the following
  two distributions:
  \begin{align*}
    \mathcal{S}
    &= w \circ S_\mu(1^{n}, w) ~~\text{where } w \sim \cU_{r_{S}(n)} \\
    \mathcal{I}
    &= I_\mu(S_\mu(1^{n},w),z) \circ S_\mu(1^{n}, w)
      ~~\text{where } w \sim \cU_{r_{S}(n)} \text{ and } z \sim \cU_{r_{I}(n)}
  \end{align*}
  Because $I_{\mu}$ distributionally inverts $S_{\mu}$, we know
  $\mathcal{S} \equiv_{\alpha/2} \cI$. That is, suppressing the
  parameter $1^{n}$ for readability, we have the following bound for
  \emph{any} statistical test, and so for $\err$ in particular:
  \[
    \left|
      \Pr_{w,z \sim \cU}
      [\err(I_\mu(S_\mu(w),z),~ S_{\mu}(w))]
      - \Pr_{w \sim \cU}[\err(w, S_{\mu}(w))]
    \right| \leq \alpha/2.
  \]
  Unrolling the definition of $\err$, this becomes:
  \[
    \left|      
      \Pr_{w,z \sim \cU}
      [C(I_\mu(S_\mu(w),z)) \neq f(S_{\mu}(w))]
      - \Pr_{w \sim \cU}[C(w) \neq f(S_{\mu}(w))]
    \right| \leq \alpha/2.
  \]
  By definition of $f_{\mu}$, and because $S_{\mu}$ samples from
  $\mu$, we  obtain:
  \[
    \left|
      \Pr_{z \sim \cU, x \sim \mu_{n}}
      [C(I_\mu(x,z)) \neq f(x)]
      - \Pr_{w \sim \cU}[C(w) \neq f_{\mu}(w)]
    \right| \leq \alpha/2.
  \]
  Finally, we are left with a bound on the difference in error rates
  between $C'$ and $C$ over their respective input distributions.
  \[
    \left|
      \Pr_{z \sim \cU, x \sim \mu_{n}}
      [C'(x,z) \neq f(x)]
      - \Pr_{w \sim \cU}[C(w) \neq f_{\mu}(w)]
    \right| \leq \alpha/2
  \]
  Re-arranging and using the error bound for $L$ completes the proof:
  \[
    \Pr_{z \sim \cU, x \sim \mu_{n}}
    [C'(x,z) \neq f(x)]  
    \leq \alpha/2 + \alpha/2 \leq \alpha
  \]

  Thus, $f$ is learnable over $\mu$ with the error $\alpha$ and
  failure probability $\beta$.

\end{proof}

 \section{Natural Properties $\implies$ Query-Learning over $\psamp$}
\label{sec:flex-learn-from-NatPr}

If there is a strongly useful $\P$-Natural property against $\Ppoly$,
then $\Ppoly$ is query-learnable over \textsc{Psamp}. To show this
using the reduction above (Lemma~\ref{lem:flex-learn-reduction}), we
just need to construct distributional inverters for all of
\textsc{Psamp} from natural properties. This will follow by efficient
and uniform composition of hardness results about Natural Properties
with classical relationships between various kinds of one-way
functions. Below, we restate these results as reductions from the
inversion problems defined in Section \ref{sec:samplers-inversion}.

\begin{lemma}[Weak Inversion Reduces to Natural Properties]
  \label{lem:weakI-to-NatProof}
  Let $\cR$ be any strongly useful natural property. There exists
  $k \in \nat$ such that any poynomial-time computable function on
  bitstrings is $(1/n^k, \cR)$-invertible.
\end{lemma}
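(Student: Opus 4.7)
The plan is to adapt the Impagliazzo--Kabanets--Volkovich approach: a strongly useful natural property $\cR$ defeats any candidate one-way function, and therefore must weakly invert every polynomial-time computable $f$.

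First, set up a contrapositive. Fix a polynomial-time computable $f \colon \bool^{\star} \to \bool^{\star}$ and suppose, for contradiction, that $f$ is not $(1/n^{k}, \cR)$-invertible for any constant $k$. Then for every probabilistic polynomial-time oracle machine $I^{\cR}$ the success probability $\Pr_{x \sim \cU_{n}}[f(I^{\cR}(f(x))) = f(x)]$ is superpolynomially small, so $f$ acts as a uniform one-way function against $\BPP^{\cR}$.

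Second, build a pseudorandom generator from $f$ via a uniform HILL-style construction (using the Haitner--Reingold--Vadhan streamlining for convenience). This yields $G \colon \bool^{s(n)} \to \bool^{L}$ with seed length $s(n) \le n^{c}$ for some constant $c$ and any desired polynomial output length $L$, together with a \emph{constructive} security reduction: any $\BPP^{\cR}$ distinguisher for $G(\cU_{s(n)})$ versus $\cU_{L}$ of advantage $\eta$ yields a $\BPP^{\cR}$ inverter for $f$ with success probability at least $\poly(\eta/L)$. Set $L = 2^{m}$ with $m = \Theta(\log n)$ large enough that $s(n) + O(\log m) \le 2^{am}$, where $a$ is the strong-usefulness constant of $\cR$.

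Third, use $\cR$ to contradict pseudorandomness of $G$. Every truth table in the image of $G$ is computed by hard-wiring its $s(n)$-bit seed into the bit circuit of $G$, so it lies in $\SIZE[2^{am}]$ by the choice of $m$. By strong usefulness, $\cR_{m}$ rejects all such truth tables; by largeness $\cR_{m}$ accepts at least half of all $2^{m}$-bit strings. Hence $\cR_{m}$ distinguishes $G(\cU_{s(n)})$ from $\cU_{L}$ with advantage $\ge 1/2$ and runs in time $\poly(2^{m}) = \poly(n)$. Feeding it into the HILL security reduction yields a $\BPP^{\cR}$ inverter for $f$ with success probability $1/\poly(n)$, contradicting the assumption. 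The polynomial overhead of the reduction pins down a concrete $k$ that is uniform in $f$.

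The main obstacle is obtaining a fully uniform, constructive version of the HILL reduction whose output inverter is an actual $\BPP^{\cR}$ machine rather than a nonuniform existence guarantee, so that the conclusion matches the definition of $(\eta, A)$-invertibility. The Haitner--Reingold--Vadhan variant supplies exactly this, so no new ideas are required beyond careful bookkeeping of the parameters $s(n)$, $L$, and $2^{am}$ to extract the universal exponent $k$.
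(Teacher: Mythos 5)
Your proposal is correct and follows essentially the same route as the proofs the paper cites for this lemma (the paper defers to \citet{DBLP:journals/siamcomp/AllenderBKMR06} and \citet{DBLP:conf/coco/ImpagliazzoKV18} rather than proving it): treat $f$ as a candidate one-way function, build a uniform HILL-style generator whose outputs are truth tables of $\SIZE[2^{am}]$ functions on $m = \Theta(\log n)$ inputs, use strong usefulness plus largeness of $\cR$ as a distinguisher of constant advantage, and invoke the constructive security reduction to obtain a $\BPP^{\cR}$ inverter with inverse-polynomial success. The only caveats are bookkeeping: the hard-wired circuit for an output bit has size $\poly(n)$ (the full cost of computing $G$, not just $s(n)+O(\log m)$), and the exponent $k$ produced by the reduction depends on $f$'s runtime through the seed length, which is harmless here since Lemma~\ref{lem:strongI-to-weakI} amplifies any inverse-polynomial success probability.
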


Lemma~\ref{lem:weakI-to-NatProof} first appeared as part of hardness
results for testing resource-bounded Kolmogorov complexity
\citep{DBLP:journals/siamcomp/AllenderBKMR06}. Later, it was used to
show that if there is indistinguishability obfuscation against
$\BPP^\cR$, then $\NP \subseteq \ZPP^\cR$ (Lemma 45 of
~\cite{DBLP:conf/coco/ImpagliazzoKV18}). Note that the (weak)
inversion success probability is a fixed inverse polynomial $n^{-k}$.

\begin{lemma}[Strong Inversion Reduces to Weak Inversion]
  \label{lem:strongI-to-weakI}
  Let $f$ be an $n$-input Boolean function, and let $q$ and $p$ be any
  polynomials. Define the $t(n)$-\textbf{direct product} of $f$ as
  $f'(x_1,\dots,x_t(n)) = f(x_1)\circ \dots \circ f(x_{t(n)})$. Suppose $f'$
  is $(1/q(m), A)$-invertible for some oracle $A$. Then, $f$ is
  $(1 - 1/p(n), A)$-invertible.
\end{lemma}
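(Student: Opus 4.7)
The plan is to exhibit a probabilistic polynomial-time inverter $B^A$ for $f$ that makes oracle calls to the assumed $f'$-inverter $I'^A$. On input $y$, $B^A$ runs $T$ independent trials; in each trial it samples a coordinate $i \sim [t(n)]$ and fillers $x_j \sim \cU_n$ for $j \neq i$, forms the synthetic query $q = (q_1, \ldots, q_{t(n)})$ with $q_i = y$ and $q_j = f(x_j)$ for $j \neq i$, calls $I'^A(q)$ to obtain $(z_1, \ldots, z_{t(n)})$, and outputs $z_i$ as soon as $f(z_i) = y$. Before the analysis I would symmetrize $I'^A$ by prepending a uniformly random permutation of the $t(n)$ coordinates of its query and un-permuting the returned tuple; this preserves the assumed $1/q(m)$ inversion rate (uniform queries remain uniform under permutation, where $m = n \cdot t(n)$) while making $I'^A$ coordinate-symmetric. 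The direct-product size $t(n)$ and the trial count $T$ will be pinned down as polynomials in $n$, $p(n)$, and $q(m)$ once the analysis dictates.

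For the analysis, let $\gamma(x)$ denote the probability (over one trial's randomness) that $B^A$ recovers some preimage of $f(x)$, and let $\iota(w) = \Pr[f'(I'^A(f'(w))) = f'(w)]$ be $I'^A$'s full-inversion probability on $f'(w)$. Embedding $x \sim \cU_n$ at a random coordinate with uniform fillers yields a uniformly random $w \in \bool^{n t(n)}$, and since $I'^A$ fully inverting $f'(w)$ entails inverting every coordinate in particular, averaging gives
\[
  \mathbb{E}_{x \sim \cU_n}[\gamma(x)] \;\geq\; \mathbb{E}_{w \sim \cU_{n t(n)}}[\iota(w)] \;\geq\; \frac{1}{q(m)}.
\]
Call $x$ \emph{bad} if $\gamma(x) < \tau$ for a threshold $\tau$ to be chosen, and write $\beta$ for the bad fraction. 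Symmetry of $I'^A$ guarantees $\mathbb{E}_{w_{-i}}[\iota(w) \mid w_i = x] \leq \gamma(x)$, so whenever any coordinate $w_i$ is bad the conditional full-inversion rate drops below $\tau$. Splitting $\mathbb{E}_w[\iota(w)]$ according to whether $w$ contains any bad coordinate, and using a union bound over coordinates on the ``some bad'' event, yields the key inequality
\[
  \frac{1}{q(m)} \;\leq\; (1-\beta)^{t(n)} \;+\; t(n)\,\beta\,\tau.
\]

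Setting $\tau = 1/(2 t(n) q(m))$ makes the second summand at most $1/(2q(m))$, forcing $(1-\beta)^{t(n)} \geq 1/(2q(m))$ and hence $\beta \leq \ln(2 q(m))/t(n)$. Choosing $t(n) = \lceil 2 p(n) \ln(2 q(m)) \rceil$ then gives $\beta \leq 1/(2p(n))$. For any good $x$, the probability that all $T$ trials fail is at most $(1-\tau)^T$, and picking $T = \lceil \ln(2 p(n))/\tau \rceil$ pushes this to at most $1/(2 p(n))$. The total failure probability of $B^A$ over $x \sim \cU_n$ is therefore at most $\beta + 1/(2 p(n)) \leq 1/p(n)$, which together with the polynomial bounds on $t(n)$ and $T$ establishes that $f$ is $(1 - 1/p(n), A)$-invertible. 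The main obstacle is the bound $\beta \leq \ln(2 q(m))/t(n)$: naive amplification of $\gamma(x)$ by independent repetition does not suffice because $\gamma(\cdot)$ could in principle be nearly $\bool$-valued with small mean, so the proof must use the direct-product structure of $f'$ to convert the hypothesized \emph{global} average success rate into a \emph{pointwise} bound on the fraction of bad $x$ rather than amplifying $\gamma$ coordinate-by-coordinate.
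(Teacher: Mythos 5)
Your proposal is correct, and it is essentially the standard Yao-style amplification argument (plant the challenge at a random coordinate with uniform fillers, call the direct-product inverter, and bound the density of ``bad'' inputs via the split into all-good tuples versus a union bound over bad coordinates) that the paper itself does not reprove but invokes by citation to Theorem 2.3.2 of Goldreich. Your permutation-symmetrization is a harmless optional step (averaging over the random planted position already yields the needed bound $\sum_i \mathbb{E}[\iota\cdot\mathbf{1}(w_i \text{ bad})]\le t\beta\tau$), and your explicit choice of a sufficiently large polynomial $t(n)$ matches the intended reading of the lemma, since the statement cannot hold for arbitrary $t(n)$.
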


Lemma~\ref{lem:strongI-to-weakI} first appeared implicitly in
\cite{DBLP:conf/focs/Yao82a} --- see Theorem 2.3.2 of
\cite{goldreich2001} for a proof. Here, it is important for us that
even a fixed inverse-polynomial weak inversion probability amplified
to strong inversion probability $1 - p(n)$ for \emph{any}
inverse-poynomial $p(n)$ --- because of Lemma
\ref{lem:weakI-to-NatProof}, which only gives fixed inverse-polynomial
probaility of inversion. The cost for this flexibility is just a
larger polynomial runtime of the resulting inverter, proportional to
the increased success probability.

Below, let $H_{n,m}$ be a family of hereditarily universal hash
functions from $n$-bit strings to $m$-bit strings. We write
$z \downharpoonleft_{i}$ to mean ``the first $i$ bits of binary string
$z$.''

\begin{lemma}[Distributional Inversion Reduces to Strong Inversion]
  \label{lem:distI-to-strongI}
  Let $f$ be an $n$-input Boolean function. Define the
  $c$-\textbf{truncating hash} of $f$ as
  $f'(h,i,x) = h \circ i \circ f(x) \circ h(x)\downharpoonleft_{i}$,
  where $i \in [m], ~h \in H_{n,m}$, and $m = n + (6c+6)\log
  n$. Suppose $f'$ is $(1 - 1/n^{6c}, A)$-invertible for some oracle
  $A$. Then, $f$ is $(2/n^{c}, A)$-distributionally invertible.
\end{lemma}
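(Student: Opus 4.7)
The plan is to adapt the Impagliazzo--Luby distributional-inverter construction to the hash-truncation structure of $f'$. The distributional inverter $I^A$ on input $y$ samples $i \in [m]$, $h \in H_{n,m}$, and $v \in \{0,1\}^i$ all uniformly at random, invokes the strong inverter $I'$ for $f'$ on $(h, i, y, v)$ to obtain a candidate preimage $x'$, and outputs $x'$. The intuition is that the truncated hash $h(x)\downharpoonleft_i$ acts as a ``sieve'': when $2^i$ is slightly larger than $N_y := |f^{-1}(y)|$, the bucket $\{x \in f^{-1}(y) : h(x)\downharpoonleft_i = v\}$ is typically a singleton over uniform $h, v$, so the unique valid preimage that $I'$ must return is forced by pairwise independence of $h$ to be uniform over $f^{-1}(y)$.

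To make this precise, I would first identify for each $y$ in the image a ``critical'' range of indices $i$, roughly $[\lceil \log N_y \rceil + \Theta(c\log n),~m]$, such that over uniform $h, v$ the bucket is a singleton with probability at least $1 - 1/n^{\Omega(c)}$ by pairwise independence of $H_{n,m}$. On this event, whenever $I'$ returns a valid preimage, the output is forced to be uniform over $f^{-1}(y)$. The choice $m = n + (6c+6)\log n$ ensures this critical range has constant density in $[m]$ for every $y$, so a uniformly sampled $i$ lands in it with good probability regardless of $N_y$.

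Next I would transfer the strong-inversion guarantee of $I'$, stated over the image distribution of $f'$, to the algorithm's actual input distribution (uniform $(h, i, v)$ with $y \sim f(U_n)$). These two distributions differ only in the marginal of $v$ given $(h, i, y)$: the image distribution has $v = h(x)\downharpoonleft_i$ for uniform $x \in f^{-1}(y)$, while the algorithm uses $v$ uniform on $\{0,1\}^i$. A Leftover Hash Lemma argument shows the two are $O(1/n^{3c})$-close in the critical regime; combined with a Markov-style bound on the $1/n^{6c}$ failure probability of $I'$, this yields that $I'$ succeeds with probability $1 - O(1/n^{3c})$ on a random algorithm input and returns a near-uniform preimage whenever it succeeds.

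Finally, expanding the statistical distance between $x \circ f(x)$ and $I^A(f(x)) \circ f(x)$ as an expectation over $y \sim f(U_n)$ of the per-$y$ statistical distance between $I^A(y)$ and the uniform distribution on $f^{-1}(y)$, the above bounds combine to yield the claimed $2/n^c$ closeness. The main obstacle is bookkeeping the contribution of ``bad'' regimes of $i$: for $i \ll \log N_y$ the bucket has many preimages and $I'$ may return a non-uniform one, while for $i \gg \log N_y$ the bucket is usually empty and $I'$ simply fails. Separating these regimes and charging each to $O(1/n^c)$ statistical distance, while choosing $m$ large enough that the good regime has constant density for every $y$ simultaneously, is precisely what sets the polynomial parameters $1/n^{6c}$ and $m = n + (6c+6)\log n$ in the lemma's statement.
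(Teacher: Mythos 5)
Note first that the paper does not prove this lemma at all: it is imported by citation from \citet{DBLP:conf/focs/ImpagliazzoL89} (see Theorem 4.2.2 of \citet{RussellThesis}), so your sketch has to be measured against that standard argument. Measured that way, it has a genuine gap at its central step. Your inverter makes a \emph{single} call on $(h,i,y,v)$ with $v$ uniform in $\{0,1\}^i$, and you justify this by claiming (a) that in the ``critical'' regime $2^i \geq N_y\cdot n^{\Theta(c)}$ the bucket $\{x\in f^{-1}(y): h(x)\downharpoonleft_i=v\}$ is typically a singleton, and (b) that a Leftover Hash Lemma argument makes uniform $v$ close to the image marginal $h(x)\downharpoonleft_i$ there. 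Claim (b) is backwards: the LHL gives closeness of $(h,h(x)\downharpoonleft_i)$ to $(h,\cU_i)$ only when $2^i\leq N_y/n^{\Theta(c)}$, i.e.\ exactly the opposite regime, where buckets are large and the inverter may return an adversarially chosen (non-uniform) element. In your critical regime the two distributions are at statistical distance about $1-N_y/2^i \geq 1-n^{-\Theta(c)}$, i.e.\ nearly maximal, and correspondingly the bucket under a uniform $v$ is typically \emph{empty}, not a singleton: a single invocation has a valid preimage to find only with probability about $N_y/2^i = n^{-\Theta(c)}$ (further multiplied by the $O(c\log n/n)$ chance that a uniformly random $i\in[m]$ even lands in the critical window --- your ``constant density'' claim is also off; the role of $m=n+(6c+6)\log n$ is merely to make that window nonempty even when $N_y=2^n$). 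On the complementary, overwhelmingly likely event the strong inverter's guarantee says nothing and its output is unconstrained, so the output of your one-shot $I^A$ can be at distance close to $1$ from uniform over $f^{-1}(y)$; no bookkeeping over ``bad regimes of $i$'' can charge this to $O(1/n^c)$.

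The missing ingredients, which are exactly what the Impagliazzo--Luby construction supplies, are verification and rejection sampling: after calling $I'$ on $(h,i,y,v)$ one checks that the returned $x'$ satisfies $f(x')=y$ and $h(x')\downharpoonleft_i=v$, rejects otherwise, and repeats with fresh $(h,v)$ (and with $i$ enumerated or estimated near $\log N_y$) polynomially many times. Each trial accepts with noticeable probability $\approx N_y/2^i$, and \emph{conditioned on acceptance} in the singleton regime each $x'\in f^{-1}(y)$ is hit with essentially equal probability $2^{-i}$, so the conditional output is near-uniform; the strong inverter's $1/n^{6c}$ failure probability must then be transferred to the relevant conditional distribution (Markov over $y$ and over the $\approx n$ choices of $i$) and shown to be negligible relative to the per-trial acceptance probability --- this interplay is what fixes the exponents $6c$ and the length $m$ in the statement. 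Without the verify-and-retry mechanism and with the LHL applied in the wrong direction, the proposed proof does not establish the lemma.
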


Lemma~\ref{lem:distI-to-strongI} was used to show that many
cryptographic tasks are equivalent to the standard definition of a
one-way function \cite{DBLP:conf/focs/ImpagliazzoL89} --- see Theorem
4.2.2 of \cite{RussellThesis} for a proof. We are now ready to obtain
distributional inverters from natural proofs by composition of the
above.

\begin{lemma}[Natural Proofs Imply Distributional Inverters]
  \label{lem:natP-to-distI}
  If there is a strongly useful $\P$-natural property $\cR$, then
  every $\mu$ in \textsc{Psamp} is $(\poly, \cR)$-distributionally
  invertible.
\end{lemma}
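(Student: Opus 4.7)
The plan is to obtain the conclusion by chaining the three preceding lemmas in reverse order of their statement. Fix an arbitrary $\mu \in \textsc{Psamp}$ with poly-time sampler $S_\mu$, and fix an arbitrary constant $c$; it suffices to exhibit, for each such $c$, a $(1/n^c, \cR)$-distributional inverter for $S_\mu$, since this is how $(\poly, \cR)$-distributional invertibility is quantified. The strategy is: first apply Lemma~\ref{lem:distI-to-strongI} to reduce distributional inversion of $S_\mu$ to strong inversion of the $c$-truncating hash $S'_\mu(h,i,x) = h \circ i \circ S_\mu(x) \circ h(x)\downharpoonleft_i$ with error $1/n^{6c}$; then apply Lemma~\ref{lem:strongI-to-weakI} to reduce that to weak inversion of a suitable $t$-fold direct product $(S'_\mu)^{\otimes t}$; and finally apply Lemma~\ref{lem:weakI-to-NatProof}, which hands us exactly such a weak inverter from $\cR$, because $(S'_\mu)^{\otimes t}$ is polynomial-time computable.

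Concretely, I would first verify that $S'_\mu$ is polynomial-time computable: it evaluates $S_\mu$ once and one member of the hereditarily universal family $H_{n,m}$ with $m = n + O(\log n)$, then truncates. Hence for any polynomial $t = t(n)$, the direct product $(S'_\mu)^{\otimes t}$ is also polynomial-time computable on inputs of total length $N = t \cdot \mathrm{poly}(n)$. Lemma~\ref{lem:weakI-to-NatProof} then supplies a fixed $k \in \nat$ (independent of $c$ and $\mu$) such that $(S'_\mu)^{\otimes t}$ is $(1/N^k, \cR)$-invertible. Feeding this weak-inversion probability into Lemma~\ref{lem:strongI-to-weakI} with target strong-inversion error $1/n^{6c}$ fixes an appropriate polynomial choice of $t$; the lemma then returns a strong inverter for $S'_\mu$ achieving success at least $1 - 1/n^{6c}$, using oracle $\cR$ and running in polynomial time.

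Finally, Lemma~\ref{lem:distI-to-strongI} turns this strong inverter into a $(2/n^c, \cR)$-distributional inverter for $S_\mu$. Since $c$ was arbitrary and $2/n^c$ is itself $1/\poly(n)$, this yields $(\poly, \cR)$-distributional invertibility of every $\mu \in \textsc{Psamp}$.

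The main obstacle I anticipate is bookkeeping the polynomial parameters across the three reductions so that the composed inverter is genuinely polynomial-time. The subtle point is that Lemma~\ref{lem:weakI-to-NatProof} yields only one fixed inverse-polynomial weak-inversion bound $1/n^k$, whereas we need distributional inversion to arbitrarily small inverse-polynomial error $1/n^c$. The reason the chain still closes is that Lemma~\ref{lem:strongI-to-weakI} lets us amplify any fixed $1/\poly$ weak-inversion probability into any desired $1-1/\poly$ strong-inversion probability by enlarging the direct-product parameter $t$, and Lemma~\ref{lem:distI-to-strongI} in turn converts that into any desired $1/\poly$ distributional-inversion error via a $\poly(n)$-size hash. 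Verifying that the runtime blow-ups incurred at each stage (direct product, universal hashing, and the natural-property-based weak inverter) combine to stay polynomial in $n$ is the only nontrivial accounting; no new ideas are required beyond plugging the lemmas together.
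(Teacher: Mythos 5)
Your proposal is correct and follows essentially the same route as the paper: take the $c$-truncating hash of the sampler, take a polynomial direct product of it, invoke Lemma~\ref{lem:weakI-to-NatProof} for weak inversion from $\cR$, amplify via Lemma~\ref{lem:strongI-to-weakI}, and convert to distributional inversion via Lemma~\ref{lem:distI-to-strongI}. The only difference is presentational (you state the reductions top-down while the paper builds the composed function and then unwinds it), and your parameter bookkeeping matches the paper's choice of an $n^{6c}$-fold direct product.
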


\begin{proof}
  Fix an arbitrary ensemble $\mu \in$ \textsc{Psamp}, and arbitrary
  polynomial $p(n)$ as the target distribution and parameter of
  statistical distance for distributional inversion. Let $f$ be the
  polynomial-time sampler associated with $\mu$. Choose $c$ such that
  $1/p(n) < 2/n^c$. Then, let $h$ be the $c$-truncating hash of $f$
  and $g$ be the $n^{6c}$-direct product of $h$. Because $f$ is
  efficiently computable, so are $h$ and $g$ --- we only took a
  polynomial-sized direct product of $h$ and there are efficient
  universal hash families \citep{DBLP:journals/jcss/CarterW79}.

  We now unwind the above transformation of $f$ to obtain a
  distributional inverter. First, Lemma \ref{lem:weakI-to-NatProof}
  applies and $g$ is $(1/n^k, \cR)$-invertible for some fixed
  $k$. Then, Lemma \ref{lem:strongI-to-weakI} applies to $g$ and $h$,
  and so $h$ is $((1 - 1/n^{6c}), \cR)$-invertible. Finally, Lemma
  \ref{lem:distI-to-strongI} means that $f$ is
  $(1/p(n), \cR)$-distributionally invertible. As $p(n)$ was an
  arbitrary polynomial and $\mu \in$ \textsc{Psamp} was arbitrary, the
  lemma follows.
\end{proof}

Having obtained the required inverters from natural properties, we
formally recall that sufficiently useful natural properties imply
efficient learning of $\Ppoly$ over $\cU$.

\begin{theorem}[Natural Proofs Imply Learning over $\cU$
  \citep{DBLP:conf/coco/CarmosinoIKK16}]
    \label{thm:natP-to-learnU}
  If there is a strongly useful $\P$-Natural property, then $\Ppoly$
  is query-learnable over $\cU$ in randomized polynomial time.
\end{theorem}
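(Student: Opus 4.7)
This theorem restates the main result of \citet{DBLP:conf/coco/CarmosinoIKK16}, so my plan is to invoke their construction rather than reprove it from scratch. The underlying strategy is to turn the natural property $\cR$ into a distinguisher for a Nisan-Wigderson pseudorandom generator built from the target function $f$, and then run the ``distinguisher to predictor'' reconstruction direction of the Nisan-Wigderson analysis to convert that distinguisher into a hypothesis for $f$.

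First, given membership queries to $f \in \Ppoly$ on $n$ inputs, I would instantiate a Nisan-Wigderson generator $G^{f}$ with output length $m = \poly(n)$, chosen large enough (as a function of the constant $a$ witnessing strong usefulness) that the polynomial circuit size of $f$ is dominated by $2^{a \log m} = m^{a}$. Each bit of $G^{f}(z)$ is a single evaluation $f(z|_{S_{i}})$ over a combinatorial-design coordinate projection, so membership queries to $f$ suffice to evaluate $G^{f}$ on any seed $z$. Viewed as the truth table of a function $g_{z} \colon \bool^{\log m} \to \bool$, the output string $G^{f}(z)$ has circuit size at most $\poly(n) + O(\log m)$. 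Strong usefulness of $\cR$ then forces $g_{z} \notin \cR_{\log m}$ for all but finitely many $n$, while largeness gives $\Pr_{y \sim \cU_{m}}[y \in \cR_{\log m}] \geq 1/2$. Thus $\cR$ distinguishes $G^{f}$ from uniform with constant advantage.

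Second, I would feed this distinguisher into the Nisan-Wigderson reconstruction procedure, which returns, with noticeable probability, a randomized polynomial-size oracle circuit (with $\cR$ as its oracle) that agrees with $f$ on a $1/2 + 1/\poly(n)$ fraction of inputs in $\bool^{n}$. Because $\cR$ is a $\P$-natural property, each oracle call runs in $\poly(m) = \poly(n)$ time, so the weak learner itself is polynomial-time. Finally, uniform boosting via an Impagliazzo-style hardcore / Yao XOR step --- exactly as executed in the CIKK argument --- would amplify this weak predictor into a hypothesis meeting any prescribed inverse-polynomial error bound over $\cU$ with inverse-polynomial failure probability.

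The main obstacle, and the step that most demands careful black-box reuse of CIKK, is the joint parameter calibration: the NW combinatorial design, the output length $m$, and the query complexity of the reconstruction must all be tuned so that (a) the circuit-size bound on each $g_{z}$ sits safely below the $2^{a \log m}$ threshold of strong usefulness, and (b) the reconstruction plus hardness amplification keep the overall running time and the number of queries to $\cR$ polynomial in $n$. Once those constants are pinned down, the remaining ingredients compose in a black-box fashion and yield exactly the randomized polynomial-time learner claimed in the theorem.
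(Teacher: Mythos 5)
Your proposal is correct and matches the paper's treatment: the paper does not reprove this statement but invokes it as the main result of \citet{DBLP:conf/coco/CarmosinoIKK16}, and your sketch of their NW-generator-plus-reconstruction argument (natural property as distinguisher, membership queries to evaluate the generator, reconstruction into a weak predictor, then uniform hardness amplification) is the right account of what that citation hides. Nothing further is needed beyond the black-box appeal to CIKK, exactly as you propose.
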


Theorem \ref{thm:natP-to-learnPSAMP} now follows by combining the
learning algorithm of Theorem \ref{thm:natP-to-learnU} with the
distributional inverters of Theorem \ref{lem:natP-to-distI} via Lemma
\ref{lem:flex-learn-reduction} for each $\mu \in \psamp$.

 \newcommand{\Samp}{\texttt{Samp}}
\newcommand{\BitInv}{\texttt{BitInv}}
\newcommand{\ProdSamp}{\texttt{ProdSamp}}
\newcommand{\ProdInv}{\texttt{ProdInv}}

\section{$\AC^0[p]$ is Query-Learnable Over Bounded Product Distributions}
\label{sec:ac02-mq-learnable-prod}

We already know that $\AC^0[p]$ is query-learnable over $\cU$ in randomized quasipolynomial time. So, to learn over product distributions via Lemma \ref{lem:flex-learn-reduction}, we just need to construct
samplers for product distributions that are both expressible in $\AC^0[p]$ and $(\poly, \emptyset)$-distributionally invertible. As we need the explicit description of the distribution (that is, the list of biases $p_i$), we require all our distributions to be concise.  

We begin by explicitly sampling and inverting a single bit with bias $p\in (0,1)$, where $p$ is given with precision $k$ in binary.   More precisely, we interpret a $k$-bit binary string encoding $p$ as the numerator of a fraction with denominator $2^k$, or, equivalently,  as a dyadic $k$-bit number with exact $k$-bit expansion $bin(p)=b_1 \dots b_k$, so that 
\(
  p = \sum_{i=1}^{k} b_i \times 2^{-i}
\)

With the dyadic representation, to sample a single bit with probability $p$ it is enough to sample a $k$-bit string $r$ uniformly at random. This $r$ is also interpreted as a dyadic number, and the sampler returns 1 iff $r < p$ and 0 otherwise. Let \texttt{Samp}$(p)$ be the procedure that performs this operation; if we need to specify randomness $r$ explicitly, we will use the notation \texttt{Samp(p;r)}. To sample from a product distribution, sample each bit with its respective bias, using fresh randomness for each sample; let \texttt{ProdSamp}$(p_1, \dots, p_n)$ be the respective sampler. Note that both the $\Samp$ and $\ProdSamp$ are computable by $\AC^0$ circuits with auxiliary inputs for random bits (sampling of individual bits by $\ProdSamp$ can be done in parallel, since they are sampled independently). 

Now, we construct a distributional inverter for \texttt{Samp}$(p)$. Let $\cU_k$ be the uniform distribution over $k$-bit binary strings used as randomness in  \texttt{Samp}$(p)$. If we look at $bin(p)$  as a binary number, strings in $[0,bin(p)-1]$ are preimages of 1, and strings in $[bin(p),2^k-1]$ are preimages of 0, so the inverter just needs to return a uniformly random $k$-bit binary number from the corresponding interval.  As the construction uses rejection sampling, we also include a failure probability $\gamma$, which can be amplified to be arbitrarily small (such as $\gamma=1/2^{2n}$). This inverter \texttt{BitInv}$(p,b,\gamma)$ will return a uniformly random preimage of a bit $b$ sampled from $D_p$ by \texttt{Samp}$(p)$, or fail (return $\bot$) with probability at most $\gamma$.   

The bit inverter works as follows. To sample a preimage of $1$, choose $C\leq k$ such that $2^{C-1} \leq p < 2^C$ and sample a uniformly random $C$-bit string $y$, then expand this string to $k$ bits by adding $k-C$ leading 0s, if necessary; let $r$ be the resulting $k$-bit string. Now $r$ can be interpreted as a $k$-bit dyadic number in the range $[0\dots \frac{2^C-1}{2^k}]$. As $p \geq  2^{C-1}$, with probability $\geq 1/2$ it will be $r < p$. In that case, return $r$. Otherwise, sample another  $C$-bit string $y'$. Repeat this process until either $r <p$ is obtained, or the number of rounds (attempts to sample) exceeds $\lceil \log 1/\gamma \rceil$. Since in each round $r$ is sampled independently, and each round succeeds and returns a string with probability $\geq 1/2$, after $\lceil \log 1/\gamma \rceil$ rounds the failure probability will be $1/2^{\lceil \log 1/\gamma \rceil } \leq \gamma$. Finally, to sample a preimage of $0$, sample $r < 1-p$ using the same procedure as sampling $r <p$ above, then return $p+r$. As the failure probability in each round is always $\leq 1/2$ irrespective of $b,p, k, \gamma$, the same  $\lceil \log 1/\gamma \rceil$  number of rounds suffices to make the probability of failure $\leq \gamma$ for sampling the preimage of either bit. 

Note that this procedure does not have to be sequential: for a fixed $\gamma$  and $k$,  the bit inverter can be implemented as an $\AC^0$ circuit with inputs $b,p$, together with   $k \cdot \lceil \log 1/\gamma \rceil$ bits of randomness as auxiliary inputs, since essentially all it does is comparing $k$-bit numbers.  The circuit will have $k+1$ outputs, where one bit denotes success/failure and the rest are the bits of the first $r_i < p$.

\begin{lemma}\label{lem:bitinv-uniform} 
  \texttt{BitInv} samples uniformly from the preimage of \texttt{Samp}. That is, for any $p$,  \[\forall b \in \bool  \forall r, r' \in \Samp^{-1}(b) \ \ \Pr[\BitInv(b,p,\gamma)=r] = \Pr[\BitInv(b,p,\gamma)=r']\] 
 
\end{lemma}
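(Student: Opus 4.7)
The plan is to analyze the $b=1$ and $b=0$ cases separately, computing $\Pr[\BitInv(b,p,\gamma) = r]$ for a generic preimage $r$ by unrolling the rejection-sampling loop. The key insight is that in each round the per-round acceptance probability (i.e., the probability that the sampled $y$ lands in the set of valid preimages) depends only on $p$ and $C$, not on which specific preimage is eventually emitted. So the probability of emitting any particular $r$ factors into a ``reach round $i$'' term that is independent of $r$, times a ``sample $r$ in round $i$'' term that is the same constant for every preimage.

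I would first write out the $b=1$ case explicitly. Here $\Samp^{-1}(1) = \{0, 1, \dots, p-1\}$, and the choice $2^{C-1} \leq p < 2^C$ guarantees $\Samp^{-1}(1) \subseteq \{0,\dots,2^C-1\}$ so that every preimage is reachable by the $C$-bit sampling step. For any $r^* \in \Samp^{-1}(1)$ and any round $i$, the events ``rounds $1,\dots,i-1$ all reject'' and ``round $i$ samples $y = r^*$'' are independent, giving
\[
\Pr[\text{round } i \text{ outputs } r^*]
\;=\; \left(1 - \tfrac{p}{2^C}\right)^{i-1} \cdot \tfrac{1}{2^C}.
\]
Summing over $i \in \{1, \dots, \lceil \log 1/\gamma\rceil\}$ yields
\[
\Pr[\BitInv(1,p,\gamma) = r^*]
\;=\; \tfrac{1}{2^C} \sum_{i=1}^{\lceil \log 1/\gamma\rceil}\!\left(1 - \tfrac{p}{2^C}\right)^{i-1},
\]
which manifestly does not depend on $r^*$; the leftover probability mass is absorbed into the $\bot$ outcome, which the lemma does not concern.

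The $b=0$ case follows by applying the same argument to the auxiliary procedure ``sample $r' < 1-p$'' and noting that $r' \mapsto p + r'$ is a bijection from the $(1-p)$-preimage set onto $\Samp^{-1}(0) = \{p, \dots, 2^k-1\}$; uniformity in $r'$ transfers directly to uniformity of $\BitInv(0,p,\gamma)$ on $\Samp^{-1}(0)$. I do not anticipate a genuine technical obstacle here --- the argument is a bookkeeping exercise in rejection sampling. The one point requiring care is confirming that the chosen $C$ puts every preimage inside the sampling range $\{0,\dots,2^C - 1\}$, so that the per-round rejection probability is genuinely a constant independent of which preimage we track, which is exactly what lets the sum over rounds factor out of every probability identically.
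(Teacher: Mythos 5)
Your proposal is correct and follows essentially the same approach as the paper: both arguments rest on the fact that each valid $C$-bit string is equally likely to be drawn, with the bijection $r' \mapsto p + r'$ handling the $b=0$ case. Your explicit sum over rejection rounds, giving $\Pr[\BitInv(1,p,\gamma)=r^*] = \tfrac{1}{2^C}\sum_{i=1}^{\lceil \log 1/\gamma\rceil}\left(1-\tfrac{p}{2^C}\right)^{i-1}$, is in fact a more careful rendering of the paper's terser claim that ``the probability that $\BitInv$ chooses a specific $C$-bit binary string is $1/2^C$'' (which, read literally, is only the per-round probability), and it makes transparent why the round-dependent factor cancels identically across all preimages.
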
 
\begin{proof} 
Without loss of generality, suppose that $b=1$. Then $r,r' < p$, and thus since $bin(p) < 2^C$ they can be written as $r=0^{k-C}y$ and  $r'=0^{k-C}y'$. Moreover,  $r$ and $r'$ are uniquely determined by $y$ and $y'$.  By construction, the probability that \texttt{BitInv} chooses a specific $C$-bit binary string is $1/2^C$. Thus, $\Pr[\BitInv(b,p,\gamma)=r] = \Pr[\BitInv(b,p,\gamma)=r']=1/2^C$ (note that $\Pr[\BitInv(b,p,\gamma)=\bot] = \Sigma_{z \in \{0^{k-C}z_1 \dots z_C \mid z \geq p\}}  1/2^C = (2^C-bin(p))/2^C$  
\end{proof}

Finally, let $\mathcal{D} = \mathcal{D}_{p_1} \times \mathcal{D}_{p_2} \times \dots \times \mathcal{D}_{p_n}$, where $\mathcal{D}_p$ is the distribution sampled by $\texttt{Samp}(p)$. For each $p_i$, let $k_i$ be its precision (that is, number of bits in its dyadic representation).   A preimage of $x=x_1 \dots x_n$ sampled from $\mathcal{D}$ is a list of $n$ strings $r_1, \dots,  r_n$,  where each $r_i$ is uniformly distributed among preimages of $x_i$ (that is, $k_i$-bit strings that when used as randomness in \texttt{Samp}$(p_i)$ result in output $x_i$).   The distributional inverter for $\mathcal{D}$,  \texttt{ProdInv}$(x,p_1 \dots p_n, \gamma)$, obtains each $r_i$ by calling $\texttt{BitInv}(x_i,p_i, \gamma)$; if at least one call to $\texttt{BitInv}$ fails, then \texttt{ProdInv} fails as well.

\begin{lemma}
  \texttt{ProdInv} samples uniformly from preimage of
  \texttt{ProdSamp}.   
    That is, for $x \in \{0,1\}^n$: 
\begin{align*} 
\Pr&[\ProdInv(x)  = (r_1, \dots, r_n)]   = \Pr[\ProdInv(x) = (r'_1, \dots, r'_n)]\\  
    &\forall  (r_1,\dots,r_n),  (r'_1, \dots, r'_n)  \in \ProdSamp^{-1}(x)
\end{align*}

\end{lemma}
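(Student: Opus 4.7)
The plan is to reduce the statement to the per-coordinate uniformity guaranteed by Lemma~\ref{lem:bitinv-uniform}, exploiting the fact that \ProdSamp{} operates on each coordinate with fresh, disjoint randomness. First I would observe that the preimage set decomposes as a Cartesian product:
\[
\ProdSamp^{-1}(x) \;=\; \prod_{i=1}^{n} \Samp^{-1}(x_i),
\]
since a randomness tuple $(r_1,\dots,r_n)$ maps to $x$ under \ProdSamp{} iff each $r_i$ maps to $x_i$ under $\Samp(p_i)$. Hence showing that $\ProdInv(x)$ is uniform on $\ProdSamp^{-1}(x)$ reduces to showing that the joint distribution of the returned tuple is a product of uniforms on the coordinate preimage sets.

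Next I would unpack the construction of \ProdInv. By definition, \ProdInv{} calls $\BitInv(x_i, p_i, \gamma)$ on each coordinate using independent random tapes, and returns $(r_1,\dots,r_n)$ unless some call produces $\bot$. Independence of the tapes implies that for any fixed tuple $(r_1,\dots,r_n) \in \ProdSamp^{-1}(x)$,
\[
\Pr[\ProdInv(x) = (r_1,\dots,r_n)] \;=\; \prod_{i=1}^{n} \Pr[\BitInv(x_i, p_i, \gamma) = r_i].
\]
By Lemma~\ref{lem:bitinv-uniform}, each factor depends only on $x_i$ and $p_i$ (concretely, it equals $1/2^{C_i}$ in the notation of that lemma), not on the specific choice of $r_i \in \Samp^{-1}(x_i)$. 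Multiplying across coordinates gives the same value for every tuple in $\ProdSamp^{-1}(x)$, which is exactly the lemma's claim.

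I do not anticipate any real obstacle: the argument is a direct product-measure calculation enabled by the independence of the per-bit rejection samplers and the factored structure of $\ProdSamp^{-1}(x)$. The only minor point to be careful about is that $\BitInv$ can return $\bot$, but Lemma~\ref{lem:bitinv-uniform} already accounts for this by assigning the complementary mass to $\bot$; the non-failing outcomes retain their equal unconditional probabilities, so no renormalization subtlety arises when we multiply across coordinates.
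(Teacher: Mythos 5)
Your proposal is correct and matches the paper's own argument: both proofs use the independence of the per-coordinate \texttt{BitInv} calls to factor the probability and then invoke Lemma~\ref{lem:bitinv-uniform} to conclude each factor is the same for every preimage of $x_i$, so the product is constant over $\ProdSamp^{-1}(x)$. Your version is slightly more explicit about the Cartesian-product structure of the preimage set and the exact value $1/2^{C_i}$, but the route is essentially identical.
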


\begin{proof} 
Let  $(r_1,\dots,r_n),  (r'_1, \dots, r'_n)  \in \ProdSamp^{-1}(x)$. As a preimage of each $x_i$ is sampled using independent fresh random bits, for every  $i\neq j$, $z_i\in \Samp^{-1}(x_i)$, $z_j \in \Samp^{-1}(x_j)$  \(\Pr[\BitInv(x_i)=z_i \wedge  \BitInv(x_j)=z_j] = \Pr[\BitInv(x_i)=z_i]\cdot \Pr[\BitInv(x_j)=z_j]\  \).  

For each $i$, the distribution over preimages $r_i$ of $x_i$ sampled by $\BitInv$ is uniform by lemma \ref{lem:bitinv-uniform}. Now, a direct product of uniform distributions is equivalent to a uniform distribution over respective direct products, completing the proof.   \end{proof} 

\begin{lemma}\label{lem:prodinv-failure}
  The failure probability of \texttt{ProdInv}$(x,p_1 \dots p_n, \gamma)$ is at most $\gamma n $, for $\gamma < 1/n$.  
\end{lemma}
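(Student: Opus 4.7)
The plan is to bound the failure probability of \texttt{ProdInv} by a straightforward union bound over its $n$ independent calls to \texttt{BitInv}. Recall from the construction that \texttt{ProdInv}$(x, p_1 \dots p_n, \gamma)$ simply invokes \texttt{BitInv}$(x_i, p_i, \gamma)$ once for each coordinate $i \in [n]$, concatenates the results, and declares failure precisely when at least one of these sub-calls returns $\bot$.

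First, I would recall the per-call failure bound established in the construction of \texttt{BitInv}: regardless of the bit $b$, the bias $p$, or the precision $k$, the rejection-sampling loop terminates after at most $\lceil \log 1/\gamma \rceil$ rounds, each of which succeeds independently with probability at least $1/2$. Hence $\Pr[\BitInv(x_i, p_i, \gamma) = \bot] \leq 2^{-\lceil \log 1/\gamma\rceil} \leq \gamma$ for every $i$.

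Second, since the calls \texttt{BitInv}$(x_i, p_i, \gamma)$ for $i = 1, \dots, n$ use fresh, independent randomness, the event ``some call fails'' is a union of $n$ events each of probability at most $\gamma$. A union bound then yields
\[
  \Pr[\ProdInv(x, p_1 \dots p_n, \gamma) = \bot]
  \;\leq\; \sum_{i=1}^{n} \Pr[\BitInv(x_i, p_i, \gamma) = \bot]
  \;\leq\; n \gamma.
\]
The hypothesis $\gamma < 1/n$ just ensures that this bound is non-trivial (i.e.\ strictly less than $1$), which is exactly what one needs before amplifying $\gamma$ to, say, $1/2^{2n}$ so that the overall failure probability of \texttt{ProdInv} becomes negligible.

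There is no real obstacle here; the only thing worth being careful about is that the per-coordinate bound in \texttt{BitInv} is uniform in the parameters $b$, $p_i$, and $k_i$, so that the same $\gamma$ controls every term in the union bound. This was already observed in the paragraph constructing \texttt{BitInv} (``the failure probability in each round is always $\leq 1/2$ irrespective of $b, p, k, \gamma$''), so the argument goes through directly.
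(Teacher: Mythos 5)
Your argument is correct. It differs slightly from the paper's: the paper computes the exact success probability $(1-\gamma)^n$ using the independence of the $n$ calls to \texttt{BitInv}, then lower-bounds it by $1-\gamma n$ (a Bernoulli/Taylor estimate, where the $\gamma < 1/n$ hypothesis is invoked), whereas you apply a union bound directly to the $n$ failure events. Your route is marginally more robust: the union bound needs neither the independence of the calls nor the hypothesis $\gamma < 1/n$ (which, as you note, only serves to make the bound nontrivial), while the paper's argument buys nothing extra here since it discards the exact expression $1-(1-\gamma)^n$ in favor of the same $\gamma n$ bound. Both are valid and yield the stated conclusion.
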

\begin{proof}
As all calls to \texttt{BitInv} have to be successful for \texttt{ProdInv} not to fail, the probability of success of \texttt{ProdInv} is $(1-\gamma)^n$. Using Taylor series approximation and the $\gamma < 1/n$ assumption, $(1-\gamma)^n \geq 1-\gamma n$.  Thus, the probability of failure of   \texttt{ProdInv} is at most $1 - (1-\gamma)^n \leq 1 - (1-\gamma n) = \gamma n$. 
\end{proof}

With this, we obtain a distributional sampler and inverter for product
distributions computable by a family of $\AC^0$  (provided all biases
$p$  are representable using constant precision). Setting  $\gamma =
2^{-n}$,  $\ProdInv$ succeeds with probability exponentially close to
$1$, and uses $n^2 \cdot \max_i{k_i} = O(n^2)$ random bits to sample
each $x \in\mathcal{D}$. Thus, we can combine it with the
\cite{carmosino_et_al:LIPIcs:2017:7584}  learning algorithm for
$\AC^0[p]$ over the uniform distribution to learn  $\AC^0[p]$ over product distributions.   

\begin{lemma}[Distributional Inverters for concise product distributions]
  \label{lem:prod-inverter}
Concise product distributions have distributional inverters. Moreover, these inverters are computable in $\AC^0$ with auxiliary random bits. 
\end{lemma}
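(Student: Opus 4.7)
The plan is to package the preceding lemmas into the required conclusion. Given any target statistical distance $\delta = 1/\poly(n)$ required by Definition~\ref{def:dist-invert}, I would set the failure parameter $\gamma$ of $\ProdInv$ small enough that $\gamma n \leq \delta$; taking $\gamma = 2^{-n}$ as the preceding discussion suggests has plenty of room. The uniformity lemma for $\ProdInv$ tells us that conditioned on no failure, $\ProdInv(x)$ is uniform on $\ProdSamp^{-1}(x)$. Since the true conditional distribution of $r$ given $\ProdSamp(r) = x$ is also uniform on $\ProdSamp^{-1}(x)$, the joint distributions $r \circ \ProdSamp(r)$ and $\ProdInv(\ProdSamp(r)) \circ \ProdSamp(r)$ agree exactly on the ``no failure'' event and can disagree only on the failure event, which by Lemma~\ref{lem:prodinv-failure} has probability at most $\gamma n \leq \delta$. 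A standard coupling bound on statistical distance then yields $r \circ \ProdSamp(r) \equiv_{\delta} \ProdInv(\ProdSamp(r)) \circ \ProdSamp(r)$, giving $(\delta, \emptyset)$-distributional inversion. Since $\delta$ was an arbitrary inverse polynomial, this yields $(1/\poly, \emptyset)$-distributional invertibility as required.

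For the $\AC^0$ claim, I would invoke the earlier observation that $\BitInv$ with fixed precision $k$ and failure parameter $\gamma$ can be implemented by an $\AC^0$ circuit with auxiliary random bits: its core operation is a parallel collection of $\lceil \log 1/\gamma \rceil$ comparisons of $k$-bit dyadic numbers, followed by a first-success selector (which is itself an $\AC^0$ operation, since ``position $i$ is the first success'' is a conjunction of $s_i$ with the negations of earlier $s_j$). Because $\ProdInv$ simply runs $n$ independent copies of $\BitInv$ in parallel on the pairs $(x_i, p_i)$ with fresh randomness, the resulting circuit is just $n$ parallel $\BitInv$ circuits together with an OR across the $n$ failure flags. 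This remains constant depth, and the size remains polynomial precisely because ``concise'' bounds each $k_i$ by a constant.

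I do not anticipate a significant obstacle; the result is effectively a packaging of the preceding construction and its two analysis lemmas. The one point meriting care is that Definition~\ref{def:dist-invert} demands closeness of the joint distribution $x \circ f(x)$, not just the marginal over preimages. Here this comes almost for free: $\ProdSamp(r)$ appears deterministically as a function of the preimage on both sides of the comparison, and the uniformity lemma is already stated per-preimage of a fixed $x$, so averaging over $x$ drawn from $\ProdSamp(\cU)$ upgrades the per-$x$ agreement (modulo failure) to joint agreement within total variation $\gamma n$.
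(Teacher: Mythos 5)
Your proposal is correct and follows essentially the same route as the paper, which states this lemma as a direct packaging of the preceding construction ($\ProdSamp$, $\BitInv$, $\ProdInv$), the two uniformity lemmas, and the failure bound, with $\gamma = 2^{-n}$ and the observation that everything reduces to parallel $k$-bit comparisons in $\AC^0$. Your explicit coupling step bounding the statistical distance of the joint distributions by the failure probability $\gamma n$ is exactly the argument the paper leaves implicit, and it is handled correctly.
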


\subsection{Learning $\AC^0[p]$ with distributional inverters }

For any prime $p$, the class $\AC^0[p]$ is closed under composition with $\AC^0$ functions. Since the paired inverter is distributional, this satisfies the requirements for applying our reduction from samplable learning to uniform learning. We now formally recall that $\AC^0[p]$ is indeed learnable over the uniform distribution.

\begin{theorem}[\citep{DBLP:conf/coco/CarmosinoIKK16}]\label{thm:learn-AC0p-unif}
For every prime $p$, the class $\AC^0[p]$ is query-learnable over $\cU$ in randomized quasi-polynomial time.
\end{theorem}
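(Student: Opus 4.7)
The plan is to invoke the ``learning from natural properties'' framework of \citet{DBLP:conf/coco/CarmosinoIKK16}, which converts a $\P$-natural property against a circuit class $\Lambda$ into a uniform-distribution membership-query learner for $\Lambda$. Two ingredients are needed: a strongly useful $\P$-natural property against $\AC^0[p]$ (supplied by the Razborov--Smolensky machinery), and an instantiation of the Nisan--Wigderson reconstruction with parameters chosen so that the overall runtime is quasipolynomial.

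For the natural property, I would rely on the fact---implicit in the classical $\AC^0[p]$ lower bounds of \citet{Razborov:aa} and \citet{DBLP:conf/stoc/Smolensky87}---that a polynomial-time test distinguishes truth tables correlated with low-degree $\mathbb{F}_p$-polynomials from truly random ones. Every polynomial-size $\AC^0[p]$ circuit on $n$ inputs is $(1-o(1))$-correlated with an $\mathbb{F}_p$-polynomial of degree $d = \log^{O(1)} n$, while for a uniformly random truth table $T$ on $n$ bits, Chernoff plus a union bound over all degree-$d$ polynomials shows that no such polynomial correlates with $T$ better than $1/2 + \gamma$ for any inverse-polynomial $\gamma$. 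The resulting test $\cR$ is computable in time polynomial in the truth-table length $2^n$, has density at least $1/2$, and is useful against $\AC^0[p]$, giving a strongly useful $\P$-natural property.

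For the reconstruction, fix a target $f \in \AC^0[p]$ on $n$ inputs and set an intermediate truth-table length $N = 2^{\log^{O(1)} n}$. Build an NW generator $G^f \colon \{0,1\}^\ell \to \{0,1\}^N$ with seed length $\ell = \log^{O(1)} n$ from a design on $\log N$ variables, so that the $i$-th output bit of $G^f(r)$ is $f$ evaluated on a projection of $r$. Viewed as a truth table on $\log N$ variables, $G^f(r)$ is an $\AC^0[p]$-projection of $f$ and so lies in the complement of $\cR$ by usefulness, whereas a uniformly random $N$-bit string lies in $\cR$ with probability at least $1/2$ by largeness. Hence $\cR$ distinguishes $G^f(U_\ell)$ from $U_N$, and the standard Yao/NW hybrid argument---using membership queries to $f$ to evaluate the bits of $G^f(r)$ and $\cR$ as the distinguisher---extracts a weak predictor for $f$ on the uniform distribution, which is amplified to a $(1 - \varepsilon)$-accurate hypothesis via standard query-learning boosting.

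The main obstacle is the joint parameter choice: $\ell$ must be small enough that seed enumeration is quasipolynomial, $N$ must be small enough that running $\cR$ on an $N$-bit truth table is quasipolynomial, yet $N$ must be large enough that the constant distinguishing advantage survives the $1/N$ hybrid loss and yields a nontrivial prediction edge. Setting $N = 2^{\log^{O(1)} n}$ balances all three costs at $n^{\log^{O(1)} n}$, which is exactly the claimed quasipolynomial runtime.
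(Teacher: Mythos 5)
The paper does not prove this statement itself; it imports it from \citet{DBLP:conf/coco/CarmosinoIKK16}, so the comparison is with that work's argument. Your outline follows the same high-level route as the cited proof (a $\P$-natural property extracted from the Razborov--Smolensky lower bound, an NW generator built from the target $f$, the natural property as distinguisher, and the algorithmic NW reconstruction driven by membership queries, with parameters balanced at $N = 2^{\log^{O(1)} n}$). However, two of your steps have genuine gaps. First, constructivity of the property as you describe it is asserted but not justified: deciding whether \emph{some} degree-$\log^{O(1)} n$ polynomial over $\mathbb{F}_p$ has correlation $1/2+\gamma$ with a given truth table is not known to be computable in time $\mathrm{poly}(2^n)$ --- brute force ranges over roughly $p^{n^{\log^{O(1)} n}}$ polynomials, vastly more than $\mathrm{poly}(2^n)$, and no efficient decoder for this regime is available. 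The property actually used (Razborov--Rudich's formalization of Smolensky's argument, reused by CIKK) is linear-algebraic: one computes, by Gaussian elimination in time $\mathrm{poly}(2^n)$, the dimension of the span of $\{f\cdot m,\, m : m \text{ a low-degree monomial}\}$, and largeness/usefulness are proved for that test. Your Chernoff-plus-union-bound largeness argument is fine, but the constructivity step as written would fail.

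Second, the closing step ``amplified to a $(1-\varepsilon)$-accurate hypothesis via standard query-learning boosting'' does not work. The predictor extracted from the Yao/NW hybrid has advantage only $\Omega(1/N) = 2^{-\log^{O(1)} n}$ and only \emph{with respect to the uniform distribution}; boosting requires the weak learner to retain its advantage under the reweighted (at least smooth) distributions the booster generates, which the NW reconstruction does not provide. \citet{DBLP:conf/coco/CarmosinoIKK16} handle this by hardness amplification \emph{before} the generator: they encode $f$ by an in-class amplifier (direct-product/Goldreich--Levin-style, with uniform local list-decoding), run the NW reconstruction on the amplified function, and decode the weak predictor into a strongly accurate hypothesis for $f$; a substantial part of their work is showing the amplifier and the NW-output truth tables stay within reach of the $\AC^0[p]$-natural property. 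Without replacing your boosting step by such an amplification-plus-decoding argument (or another mechanism that works at fixed distribution $\cU$), the sketch does not yield the theorem.
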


Theorem \ref{thm:learn-AC0p-prod} now follows by combining the learning algorithm of Theorem \ref{thm:learn-AC0p-unif} with the distributional inverters of Lemma \ref{lem:prod-inverter} via Lemma \ref{lem:flex-learn-reduction} for each $\mu$ a concise product distribution.

 \section{Conclusions \& Future Directions}
\label{sec:future-open}

A key difference between the learning algorithms we obtain
from natural properties and the general PAC setting is the quantifier
ordering. Our learning algorithms depend on the distribution; PAC
learning requires that a single algorithm work for all possible
distributions. If a \emph{universal} distributional inverter can be
constructed from a natural property, then this would further close the
gap between the PAC setting and the learning problems that reduce to
natural properties. This is a natural direction for future investigations,
especially given recent progress in constructing one-way functions from
hardness assumptions about compression problems related to the Minimum Circuit
Size Problem \citep{DBLP:conf/stoc/LiuP21}.

We have expanded the scope of the ``indirect'' method
for reducing from learnability over a samplable distribution to 
over uniform. Product distributions are just the most natural and
well-studied example of a target distribution for which the technique 
works unconditionally. What other interesting ensembles of probability distributions
have efficient samplers and distributional inverters? 
\bibliography{FlexNW}

\end{document}